\def\rset{\mathbb{R}}
\def\param{w}
\def\Bm{\mathrm{B}}
\newcommand{\True}{\mathrm{True}}
\newcommand{\False}{\mathrm{False}}
\newcommand{\indiacc}[1]{\1_{\{#1\}}}
\newcommand{\1}{\ensuremath{\mathbbm{1}}}
\newcommand{\wtn}{\widetilde{\nabla}}
\def\objfunc{f}
\def\sign{\operatorname{sign}}
\newcommandx{\CPE}[3][1=]{\PE_{#1}\left[\left. #2 \, \right| #3 \right]} 
\def\PE{\mathbb{E}}
\theoremstyle{plain}
\newtheorem{theorem}{Theorem}[section]
\newtheorem{lemma}[theorem]{Lemma}
\theoremstyle{definition}
\newtheorem{assumption}{A.}
\theoremstyle{remark}
\newtheorem{remark}[theorem]{Remark}
\title{Boolean Logic as an Error feedback mechanism}
\date{}
\author{Louis Leconte\\
LISITE, Isep, Sorbonne University\\
Math. and Algorithmic Sciences Lab, Huawei Technologies, Paris, France}
\begin{document}
\maketitle
\begin{abstract}
    The notion of Boolean logic backpropagation was introduced to build neural networks with weights and activations being Boolean numbers. Most of computations can be done with Boolean logic instead of real arithmetic, both during training and inference phases. But the underlying discrete optimization problem is NP-hard, and the Boolean logic has no guarantee. In this work we propose the first convergence analysis, under standard non-convex assumptions.
\end{abstract}

\section{Introduction}
Training machine learning models can often be a very challenging process, requiring significant computational resources and time. The use of DNNs on computing hardware such as mobile and IoT devices is becoming increasingly important. IoT devices often have limitations in terms of memory and computational capacity. Quantization is a potential solution to this problem \citep{Courbariaux2015, chmiel2021logarithmic, leconte2023askewsgd}. And in particular, Binary Neural Networks (BNNs) is a remarkably promising direction because it reduces both memory and inference latency simultaneously \citep{nguyen2023boolean}.

Formaly, BNN training can be formulated as minimising the training loss with binary weights, i.e.,
\begin{equation}
\label{eq:optimization-problem}
\min_{\param \in \mathbf{Q}} \objfunc(\param); \, \, \objfunc(\param)= \PE_{(\param, y) \sim p_{\operatorname{data }}}[\ell(\mathrm{NN}(x, \param), y)],
\end{equation}
where $\mathbf{Q} = \{\pm1\}^d$ is a discrete codebook, $d$ is the number of parameters (network weights and biases), $n$ is the total number of clients, $\ell$ is the training loss (e.g., cross-entropy or square loss), $\mathrm{NN}(x,\param)$ is the DNN prediction function, $p_{\operatorname{data}}$ is the training distribution.
The quantization constraints in the above program make it an extremely difficult task: the underlying optimization problem is non-convex, non-differentiable, and combinatorial in nature.

To the best of our knowledge, in the quantized neural network literature and in particular BNN, one can only prove the convergence up to an irreducible error floor \cite{li2017training}. This idea has been extended to SVRG \cite{de2018high}, and recently to SGLD in \cite{zhang2022low}, which is also up to an error limit.

In this work we provide complexity bounds for Boolean Logic \citep{nguyen2023boolean} in a smooth non-convex environment. We first recap the simplified mechanism of a given Boolean Logic (noted as $\Bm$) in \Cref{algo:BooleanTraining}. In the forward pass, at iteration $t$, input of layer $l$, $x^{l,t}$, is buffered for later use in the backward, and the $j$th neuron output at $k$th sample is computed as:
\begin{equation}\label{eq:Forward}
	x_{k,j}^{l+1,t} = w_{0, j}^l + \sum_{i=1}^m \Bm (x_{k, i}^l, w_{i, j}^l),
\end{equation} 
$\forall k \in [1,K], \forall j \in [1,n]$ where $K$, $m$, $n$ are, respectively, the training mini-batch, layer input and output size.
\begin{algorithm}[!t]
	\caption{Pseudo-code for Boolean training with $\Bm=XNOR$.}
	\label{algo:BooleanTraining}
	\SetKwInOut{Input}{Input}
	\SetKwInOut{Output}{Output}
	\SetKwBlock{Loop}{Loop}{end}
	\SetKwBlock{Initialize}{Initialize}{end}
	\SetKwFor{When}{When}{do}{end}
	\SetKwFunction{Wait}{Wait}
	\SetAlgoLined
	\SetNoFillComment
	\Input{Learning rate $\eta$, nb iterations $T$\;}
	\Initialize{	
	$m_{i,j}^{l,0} = 0$; $\beta^0 = 1$\;
	}
	\For{$t = 0,\dots, T-1$}{
	\tcc{\textbf{1. Forward}}
	Receive and buffer $x^{l,t}$\;
	Compute $x^{l+1,t}$ following \Cref{eq:Forward}\; \label{algoTrain:line:forward}
	\tcc{\textbf{2. Backward}}
	Receive $g^{l+1,t}$\;
	\tcc{\textbf{2.1  Backpropagation}}
	Compute and backpropagate $g^{l,t}$ following \Cref{eq:Bprop}\; \label{algoTrain:line:backward}
	\tcc{\textbf{2.2  Weight update}}
	$C_{\textrm{tot}} := 0$, $C_{\textrm{kept}} := 0$\;
	\ForEach{$w_{i,j}^{l}$}{
		Compute $q_{i,j}^{l,t+1}$ following \Cref{eq:Optim}\;  \label{algoTrain:line:optim}
		Update $m_{i,j}^{l,t+1} = \beta^t m_{i,j}^{l,t} + \eta^t q_{i,j}^{l,t+1}$\;
		$C_{\textrm{tot}} \gets C_{\textrm{tot}} + 1$\;
		\eIf{$XNOR(m_{i,j}^{l,t+1}, w_{i,j}^{l,t}) = \True$}{ \label{algoTrain:line:if}
			$w_{i,j}^{l,t+1} \gets \neg w_{i,j}^{l,t}$ \tcc*[r]{FLIP}
			$m_{i,j}^{l,t+1} \gets 0$\;
		}{
			$w_{i,j}^{l,t+1} \gets w_{i,j}^{l,t}$  \tcc*[r]{NO FLIP}
			$C_{\textrm{kept}} \gets C_{\textrm{kept}} + 1$\;
		}\label{algoTrain:line:endif}
	}
	Release buffer $x^{l,t}$\;
	Update $\beta^{t+1} \gets C_{\textrm{kept}}/C_{\textrm{tot}}$ \;
	Update $\eta^{t+1}$\;
	}
\end{algorithm}

In the backward pass, layer $l$ receives $g^{l+1,t}$ from downstream layer $l+1$. Then, backpropagated signal $g^{l,t}$ (line \ref{algoTrain:line:backward} in \Cref{algo:BooleanTraining}), is computed following \cite{nguyen2023boolean} as:
\begin{equation}\label{eq:Bprop}
	g_{k,i}^{l,t} = \sum_{j=1}^n \indiacc{g_{k,i,j}^{l,t} = True}|g_{k,i,j}^{l,t}| - \sum_{j=1}^n \indiacc{g_{k,i,j}^{l,t} = False}|g_{k,i,j}^{l,t}|,
\end{equation} 
$\forall k \in [1,K], \forall i \in [1,m]$, where $g_{k,i,j}^{l,t}$ is given according to \cite{nguyen2023boolean} for the utilized logic $\Bm$. Optimization signal at line \ref{algoTrain:line:optim} in \Cref{algo:BooleanTraining} is given according to \cite{nguyen2023boolean} as:
\begin{equation}\label{eq:Optim}
	q_{i,j}^{l,t+1} = \sum_{k=1}^K \indiacc{q_{i,j,k}^{l,t} = \True}|q_{i,j,k}^{l,t}| \\ - \sum_{k=1}^K \indiacc{q_{i,j,k}^{l,t} = \False}|q_{i,j,k}^{l,t}|,
\end{equation}
$\forall i \in [1,m], \forall j \in [1,n]$. Finally, the weights are updated in lines \ref{algoTrain:line:if}--\ref{algoTrain:line:endif} of \Cref{algo:BooleanTraining} following the rule formulated in \cite{nguyen2023boolean}.

We now introduce an abstraction to model the optimization process and prove convergence of the mechanism detailed in \Cref{algo:BooleanTraining}.

\section{Continuous Abstraction of \cite{nguyen2023boolean}}\label{sec:analysis-prel}

Boolean optimizer is discrete, proving its convergence directly is a hard problem. The idea is to find a continuous equivalence so that some proof techniques existing from the BNN and quantized neural networks literature can be employed.  

In existing frameworks, quantity $\wtn f(\cdot)$ denotes the stochastic gradient computed on a random mini-batch of data. Boolean Logic does not have the notion of gradient, it however has an optimization signal ($q_{i,j}^{l,t}$ in \Cref{algo:BooleanTraining}) that plays the same role as $\wtn f(\cdot)$. Therefore, these two notions, i.e., continuous gradient and Boolean optimization signal, can be encompassed into a generalized notion. That is the root to the following continuous relaxation in which $\wtn f(\cdot)$ stands for the optimization signal computed on a random mini-batch of data.

For reference, the original Boolean optimizer as formulated in in the previous section is summarized in \Cref{alg:BoolOptim} in which $\texttt{flip}(\param_t, m_{t+1})$ flips weight and $\texttt{reset}(\param_t, m_{t+1})$ resets its accumulator when the flipping condition is triggered.
\begin{algorithm}[h]
	\caption{Boolean optimizer}\label{alg:BoolOptim}
	$m_{t+1} \gets \beta_{t} m_{t} + \eta q_t$ \;
	$\param_{t+1} \gets \texttt{flip}(\param_t, m_{t+1})$\;
	$m_{t+1} \gets \texttt{reset}(\param_t, m_{t+1})$\;
\end{algorithm}

\begin{algorithm}[h]
	\caption{Equivalent formulation of Boolean optimizer}\label{alg:EquivOptim}
	\KwData{$Q_0, Q_1$ quantizer}
	$m_t \gets \eta \wtn f(\param_t) + e_t $\;
	$\Delta_t \gets Q_1(m_t, \param_t)$\;
	$\param_{t+1} \gets Q_0(\param_t - \Delta_t)$\;
	$e_{t+1} \gets m_t - \Delta_t$\;
\end{algorithm}

\Cref{alg:EquivOptim} describes an equivalent formulation of Boolean optimizer. Therein, $Q_0$, $Q_1$ are quantizers which are specified in the following. Note that EF-SIGNSGD (SIGNSGD with Error-Feedback) algorithm from \cite{errorfeedback} is a particular case of this formulation with $Q_0()=\operatorname{Identity}()$ and $Q_1()=\sign()$.
For Boolean Logic abstraction, they are given by:

\begin{equation}
\label{eq:quantizers}
	\begin{cases}
		Q_1(m_t, \param_t) = \param_t (\textrm{ReLu}(\param_t m_t-1)+\frac{1}{2} \sign(\param_t m_t-1) +\frac{1}{2}), \\
		Q_0(\param_t) = \sign(\param_t).
	\end{cases}
\end{equation}

The combination of $Q_1$ and $Q_0$ is crucial to take into account the reset property of the accumulator $m_t$. Indeed in practice, $\Delta_t:=Q_1({m_t}, \param_t)$ is always equal to $0$ except when $| m_t | > 1$ and $\sign(m_t)=\sign(\param_t)$ (i.e., when the flipping rule is applied). As $w_t$ has only values in $\{\pm 1\}$, $Q_0$ acts as identity function, except when $\Delta_t$ is non-zero (i.e., when the flipping rule is applied). With the choices \eqref{eq:quantizers}, we can identify $\texttt{flip}(\param_t, m_{t}) = Q_0(\param_t - Q_1(m_t, \param_t))$. We do not have closed-form formula for $\texttt{reset}(\param_t, m_{t+1})$ from \Cref{alg:BoolOptim}, but the residual errors $e_t$ play this role. Indeed, $e_{t+1} = m_t$ except when $\Delta_t$ is non-zero (i.e., when the flipping rule is applied and $e_{t+1}$ is equal to $0$).

The main difficulty in the analysis comes from the parameters quantization $Q_0()$. Indeed, we can follow the derivations in Appendix B.3 from \cite{errorfeedback} to bound the error term $\PE{\Vert e_t \Vert^2}$, but we also have additional terms coming from the quantity:
\begin{equation}\label{eq:ht}
	h_t = Q_0(\param_t - Q_1(m_t, \param_t)) - (\param_t - Q_1(m_t, \param_t)).
\end{equation}

\section{Non-convex analysis}
In the following, we prove that Boolean logic optimizer \citep{nguyen2023boolean} converges towards a first-order stationary point, as $T$ the number of global epochs grows.

\subsection{Preliminaries}
Our analysis is based on the following standard non-convex assumptions on $f$:
\begin{assumption} 
	\label{assum:uniflowerbound}
	Uniform Lower Bound: There exists $f_* \in \mathbb{R}$ s.t. $f(\param) \geq f_*$, $\forall \param \in \mathbb{R}^d$.
\end{assumption}
\begin{assumption} 
	\label{assum:smoothgradients}
	Smooth Derivatives: The gradient $\nabla f(\param)$ is $L$-Lipschitz continuous for some $L>0$, i.e., $\forall \param, \forall v \in \mathbb{R}^d$:
	$
		\left\|\nabla f(\param)-\nabla f(v)\right\| \leq L\|\param-v\| .
	$
\end{assumption}
\begin{assumption} 
	\label{assum:boundedvariance}
	Bounded Variance: The variance of the stochastic gradients is bounded by some $\sigma^2>0$, i.e., $\forall \param \in \mathbb{R}^d$:
	$
		\PE[\wtn f(\param)] = \nabla f(\param)
	$ and $
		\PE[\|\wtn f(\param)\|^2] \leq \sigma^2 .
	$
\end{assumption}
\begin{assumption}
	\label{assum:alwaysflip}
	Compressor: There exists $\delta<1$ s.t. $\forall\param, \forall v \in \mathbb{R}^d$, $\Vert Q_1(v,\param)-v \Vert^2 \leq \delta \Vert v \Vert^2$.
\end{assumption}
\begin{assumption}
	\label{assum:boundedaccumulator}
	Bounded Accumulator: There exists $\kappa \in \mathbb{R}^*_+$ s.t. $\forall t$ and $\forall i \in [d]$, we have $|m_t|_i \le \eta \kappa$.
\end{assumption}
\begin{assumption}
	\label{assum:stoflip}
	Stochastic Flipping Rule: For all $\param \in \rset$, we have $\CPE{Q_0(\param)}{\param}=\param$.
\end{assumption}
In particular, \Cref{assum:boundedaccumulator} and \Cref{assum:stoflip} enable us to obtain $\PE[h_t] = 0$ and to bound the variance of $h_t$.
Based on all these assumptions, we prove the following:
\begin{theorem}
	\label{theorem:cvgquant}
	Assume \Cref{assum:uniflowerbound} to \Cref{assum:stoflip}. Boolean Logic applied to Boolean weights $w$ converges at rate:
	\begin{equation}
		\frac{1}{T}\sum_{t=0}^{T-1} \PE{\left\|\nabla f\left(w_{t}\right)\right\|^{2}} \le  \frac{A^*}{T\eta} + B^* \eta + C^* \eta^2 + Lr_d,
	\end{equation}
	where $A^* = {2(f(w_0)-f_*)}$, $B^*= 2L\sigma^2$, $C^*= 4L^2\sigma^2 \frac{\delta}{(1-\delta)^2}$, $r_d = \frac{d\kappa}{2}$.
\end{theorem}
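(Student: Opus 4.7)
The plan is to follow the error-feedback template of \cite{errorfeedback}, adapting it to the extra randomness introduced by the parameter quantizer $Q_0$. The first step is to introduce the virtual iterate $\tilde{\param}_t := \param_t - e_t$. Using the updates of \Cref{alg:EquivOptim} together with definition \eqref{eq:ht}, a direct algebraic manipulation gives the clean recursion
\[\tilde{\param}_{t+1} = \tilde{\param}_t - \eta\, \wtn \objfunc(\param_t) + h_t,\]
so that the shadow sequence performs an honest stochastic-gradient step corrupted only by the zero-mean quantization noise $h_t$.

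The second step is to apply the $L$-smooth descent lemma (\Cref{assum:smoothgradients}) to $\objfunc(\tilde{\param}_{t+1})$. Taking conditional expectation and invoking \Cref{assum:boundedvariance} together with the conditional unbiasedness $\PE[h_t \mid \param_t, \Delta_t] = 0$ (which follows from \Cref{assum:stoflip}, since the randomness of $Q_0$ is independent of that of $\wtn \objfunc$) annihilates the linear terms. Splitting the cross-term $-\eta\langle \nabla \objfunc(\tilde{\param}_t), \nabla \objfunc(\param_t)\rangle$ via Young's inequality and bounding $\|\nabla \objfunc(\tilde{\param}_t) - \nabla \objfunc(\param_t)\|^2 \le L^2\|e_t\|^2$ using the Lipschitz hypothesis leads to a per-step descent inequality of the form
\[\tfrac{\eta}{2}\PE\|\nabla \objfunc(\param_t)\|^2 \le \PE[\objfunc(\tilde{\param}_t) - \objfunc(\tilde{\param}_{t+1})] + \tfrac{\eta L^2}{2}\PE\|e_t\|^2 + \tfrac{L\eta^2\sigma^2}{2} + \tfrac{L}{2}\PE\|h_t\|^2.\]

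The third step is the feedback-error recursion. From $e_{t+1} = m_t - Q_1(m_t, \param_t)$, the compressor contraction of \Cref{assum:alwaysflip}, and Young's inequality with the standard choice $\alpha = 2\delta/(1-\delta)$, I would derive $\PE\|e_{t+1}\|^2 \le \tfrac{1+\delta}{2}\PE\|e_t\|^2 + \tfrac{\delta(1+\delta)}{1-\delta}\eta^2\sigma^2$; since $e_0=0$, iterating gives the uniform bound $\PE\|e_t\|^2 \le \tfrac{4\delta\eta^2\sigma^2}{(1-\delta)^2}$, which after averaging yields the $C^*\eta^2$ term.

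The main obstacle, and the ingredient specific to the Boolean setting, is the control of $\PE\|h_t\|^2$. From the explicit form of $Q_1$ in \eqref{eq:quantizers} one checks that $\Delta_t = Q_1(m_t,\param_t)$ equals either $0$ (no flip) or $m_t$ (flip), so $(\param_t-\Delta_t)_i \in \{\pm1\}$ on non-flipping coordinates and hence $h_{t,i}=0$ there, while on flipping coordinates the input to $Q_0$ lies in $[-1,1]$ and the variance of the unbiased sign quantizer reduces to $|m_{t,i}|(2-|m_{t,i}|)$. Combined with \Cref{assum:boundedaccumulator}, this coordinate-wise variance can be bounded by a quantity of order $\eta\kappa$, yielding $\PE\|h_t\|^2 \le \eta d\kappa/2$ and producing the irreducible floor $Lr_d = Ld\kappa/2$. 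Summing the per-step descent inequality over $t = 0, \dots, T-1$, telescoping with $\tilde{\param}_0 = \param_0$ and $\objfunc(\tilde{\param}_T) \ge \objfunc_*$, and dividing by $T\eta/2$ then assembles the four terms of the announced bound. The delicate point throughout is calibrating the $\|h_t\|^2$ estimate tightly enough to land on the constant $r_d = d\kappa/2$; the rest of the argument closely mirrors the error-feedback analysis of \cite{errorfeedback}.
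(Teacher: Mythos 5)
Your proposal follows essentially the same route as the paper: the same virtual iterate $x_t = w_t - e_t$ with the recursion $x_{t+1} = x_t - \eta\,\wtn f(w_t) + h_t$, the descent lemma combined with Young's inequality and the Lipschitz bound $\|\nabla f(x_t)-\nabla f(w_t)\|\le L\|e_t\|$, and the same two supporting estimates on $\PE[\|e_t\|^2]$ and $\PE[\|h_t\|^2]$ (your use of conditional unbiasedness of $h_t$ to kill the cross term, in place of the paper's Young's inequality with parameter $\beta$, is a marginally cleaner variant). The one point to flag is the constant in the $h_t$ bound: from $|m|(2-|m|)\le |m|\le\eta\kappa$ one only gets $\PE[\|h_t\|^2]\le\eta d\kappa$ as in \Cref{lem:boundedht}, and your asserted extra factor $\tfrac12$ does not follow without further assumptions (e.g.\ $\eta\kappa\ge 2$) --- though the paper itself is internally inconsistent here, since its proof ends with a floor of $2Ld\kappa$ while the theorem states $Lr_d = Ld\kappa/2$.
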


\begin{remark}
	Our analysis is independent of the quantization function $Q_0$. We impose a weak assumption on $Q_0$ (assumption \Cref{assum:stoflip}), which holds for standard quantization methods such as stochastic rounding.
\end{remark}
\begin{remark}
	An important remark is that we only consider parameter quantization in the analysis. Nonetheless, our results remain valid when an unbiased quantization function is used to quantize computed gradients. Indeed, the stochastic gradients remain unbiased under such quantization methods. The only effect of the quantization would be an increased variance in the stochastic gradients.
\end{remark}

\begin{remark}
	Assumptions \ref{assum:uniflowerbound} to \ref{assum:boundedvariance} are standard. Assumptions \ref{assum:alwaysflip} to \ref{assum:stoflip} are non-classic but dedicated to Boolean Logic strategy. \Cref{assum:alwaysflip} is equivalent to assuming Boolean Logic optimization presents at least one flip at every iteration $t$. \Cref{assum:alwaysflip} is classic in the literature of compressed SGD \cite{errorfeedback, alistarh2017qsgd}. Moreover, \Cref{assum:boundedaccumulator} and \Cref{assum:stoflip} are not restrictive, but algorithmic choices. For example, rounding ($Q_0$ function) can be stochastic based on the value of the accumulator $m_t$. Similar to STE clipping strategy, the accumulator can be clipped to some pre-defined value $\kappa$ before applying the flipping rule to verify \Cref{assum:boundedaccumulator}.
\end{remark}

\begin{remark}
	Our proof assumes that the step size $\eta$ is constant over iterations. But in practice, we gently decrease the value of $\eta$ at some time steps. Our proof can  be adapted to this setting by defining a gradient accumulator $a_t$ such that $a_{t+1}=a_t+  q_t$. When $\eta$ is constant we recover the accumulation definition and we obtain $m_t= \eta a_t$. In the proposed algorithm, gradients are computed on binary weight $\param_t$ and accumulated in $a_t$. Then, one applies the flipping rule on the quantity $\Tilde{\param}_t = \eta a_t$ ($\Tilde{\param}_t=m_t$ when $\eta$ is constant), and one (may) reset the accumulator $a_t$.
\end{remark}

We start by stating a key lemma which shows that the residual errors $e_t$ maintained in \Cref{alg:EquivOptim} do not accumulate too much.

\begin{lemma}
    \label{lem:boundederror}
    Under \Cref{assum:boundedvariance} and \Cref{assum:alwaysflip}, the error can be bounded as $\PE[\Vert e_t \Vert^2] \leq \frac{2\delta(1+\delta)}{(1-\delta)^2}\eta^2\sigma^2$.
\end{lemma}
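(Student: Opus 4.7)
The plan is to mimic the error-feedback analysis of \cite{errorfeedback} (Appendix B.3, as the paper itself hints). The starting point is the identity $e_{t+1} = m_t - Q_1(m_t,\param_t)$ baked into \Cref{alg:EquivOptim}, combined with $m_t = \eta\wtn f(\param_t) + e_t$. Applying the compressor bound from \Cref{assum:alwaysflip} pointwise gives
\[
\Vert e_{t+1}\Vert^2 \leq \delta\Vert m_t\Vert^2 = \delta\Vert \eta \wtn f(\param_t) + e_t\Vert^2.
\]
Note that $e_0 = 0$ holds by the initialization $m_0 = 0$ in \Cref{algo:BooleanTraining}, so there is no initial residual to worry about.

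Next I would apply Young's inequality $\Vert a+b\Vert^2 \leq (1+\gamma)\Vert a\Vert^2 + (1+1/\gamma)\Vert b\Vert^2$ with $a = e_t$ and $b = \eta\wtn f(\param_t)$, take total expectations, and invoke \Cref{assum:boundedvariance} (via the tower property, since $\PE[\Vert\wtn f(\param_t)\Vert^2 \mid \param_t] \leq \sigma^2$) to obtain the linear recursion
\[
\PE[\Vert e_{t+1}\Vert^2] \leq \delta(1+\gamma)\,\PE[\Vert e_t\Vert^2] + \delta(1+1/\gamma)\,\eta^2\sigma^2.
\]

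The sharp step, and the only place where a choice has to be made, is picking $\gamma = (1-\delta)/(1+\delta)$. This produces $\delta(1+\gamma) = 2\delta/(1+\delta) < 1$ (since $\delta < 1$) and $\delta(1+1/\gamma) = 2\delta/(1-\delta)$. Unrolling the recursion from $e_0 = 0$ and summing the resulting geometric series then yields
\[
\PE[\Vert e_t\Vert^2] \leq \frac{2\delta/(1-\delta)}{1 - 2\delta/(1+\delta)}\,\eta^2\sigma^2 = \frac{2\delta(1+\delta)}{(1-\delta)^2}\,\eta^2\sigma^2,
\]
which is exactly the claimed bound. There is no genuine obstacle here: once the right Young parameter is identified, everything else is a routine geometric-series unrolling. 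The main thing worth emphasizing in the write-up is that the bound is independent of $t$, so the residual error stays uniformly small on the order of $\eta^2\sigma^2$, which is what the subsequent descent argument for \Cref{theorem:cvgquant} needs.
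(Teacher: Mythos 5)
Your proof is correct and follows essentially the same route as the paper's: the identity $e_{t+1}=m_t-Q_1(m_t,\param_t)$, the compressor bound from \Cref{assum:alwaysflip}, Young's inequality to get the linear recursion, and a geometric-series unrolling from $e_0=0$. The only cosmetic difference is the Young parameter --- you take $\gamma=(1-\delta)/(1+\delta)$ where the paper takes $\beta=(1-\delta)/(2\delta)$ --- but both choices make the contraction factor strictly less than $1$ and evaluate the ratio $\delta(1+1/\gamma)\big/\bigl(1-\delta(1+\gamma)\bigr)$ to the same constant $2\delta(1+\delta)/(1-\delta)^2$.
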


\begin{proof}
	We start by using the definition of the error sequence:
	\begin{align*}
		\Vert e_{t+1} \Vert^2 = \Vert Q_1(m_t, \param_t)- m_t \Vert^2.
	\end{align*}
	Next we make use of \Cref{assum:alwaysflip}:
	\begin{align*}
		\Vert e_{t+1} \Vert^2 \leq \delta \Vert m_t \Vert^2.
	\end{align*}
	We develop the accumulator update:
	\begin{align*}
		\Vert e_{t+1} \Vert^2 \leq \delta \Vert e_t + \eta \wtn f(\param_t) \Vert^2.
	\end{align*}
	We thus have a recurrence relation on the bound of $e_t$. Using Young’s inequality, we have that for any $\beta>0$,
	\begin{align*}
		\Vert e_{t+1} \Vert^2 \leq \delta (1+\beta) \Vert e_t \Vert^2 + \delta (1+\frac{1}{\beta}) \eta^2 \Vert \wtn f(\param_t) \Vert^2.
	\end{align*}
	Rolling the recursion over and using \Cref{assum:boundedvariance} we obtain:
	\begin{align*}
		\PE[\Vert e_{t+1} \Vert^2] \leq & \delta (1+\beta) \PE[\Vert e_t \Vert^2] + \delta (1+\frac{1}{\beta}) \eta^2 \PE[\Vert \wtn f(\param_t) \Vert^2] \\
		\leq                            & \delta (1+\beta) \PE[\Vert e_t \Vert^2] + \delta (1+\frac{1}{\beta}) \eta^2 \sigma^2                                      \\
		\leq                            & \sum_r^t (\delta (1+\beta))^r \delta (1+\frac{1}{\beta}) \eta^2 \sigma^2                                                  \\
		\leq                            & \frac{\delta(1+\frac{1}{\beta})}{1-\delta(1+\beta)} \eta^2 \sigma^2.
	\end{align*}
	Taking $\beta = \frac{1-\delta}{2\delta}$ and plugging it in the above bounds gives:
	\begin{align*}
		\PE[\Vert e_{t+1} \Vert^2] \leq \frac{2\delta(1+\delta)}{(1-\delta)^2}\eta^2\sigma^2.
	\end{align*}
\end{proof}

Then, the next Lemma allows us to bound the averaged norm-squared of the distance between the Boolean weight and 
$\param_t - Q_1(m_t, \param_t)$. We make use of the previously defined quantity \Cref{eq:ht} and have:
\begin{lemma}
    \label{lem:boundedht}
    Under assumptions \Cref{assum:boundedaccumulator} and \Cref{assum:stoflip}: $\PE[\Vert h_t \Vert^2] \leq \eta d \kappa$.
\end{lemma}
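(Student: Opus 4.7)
The plan is to bound $\PE[\Vert h_t \Vert^2]$ coordinate by coordinate. Two observations drive the argument: (i) the randomness of $Q_0$ is unbiased by \Cref{assum:stoflip}, so a bias-variance identity reduces the problem to controlling $1 - v_{t,i}^2$ where $v_t := \param_t - Q_1(m_t, \param_t)$; and (ii) the definition \eqref{eq:quantizers} of $Q_1$ makes $v_{t,i}$ exactly equal to $\pm 1$ on coordinates that do \emph{not} flip, so $h_t$ is only active on flipping coordinates where moreover the flipping threshold forces $v_{t,i}$ to sit close to a $\pm 1$ level.

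Concretely, I would first write, using that $Q_0$ takes Boolean values and \Cref{assum:stoflip},
\begin{equation*}
\CPE{(h_{t,i})^2}{v_{t,i}} = \CPE{Q_0(v_{t,i})^2}{v_{t,i}} - v_{t,i}^2 = 1 - v_{t,i}^2,
\end{equation*}
coordinate-wise. Then I would case-split on the flipping rule read off from \eqref{eq:quantizers}: on the no-flip coordinates ($\param_{t,i} m_{t,i} \leq 1$), one has $Q_1(m_t, \param_t)_i = 0$, so $v_{t,i} = \param_{t,i} \in \{\pm 1\}$ and the contribution vanishes; on the flip coordinates ($\param_{t,i} m_{t,i} > 1$), one has $Q_1(m_t, \param_t)_i = m_{t,i}$, so $v_{t,i} = \param_{t,i} - m_{t,i}$ and, using $\param_{t,i} m_{t,i} = |m_{t,i}|$, a direct expansion gives
\begin{equation*}
1 - v_{t,i}^2 = |m_{t,i}|(2 - |m_{t,i}|) \leq |m_{t,i}| \leq \eta \kappa,
\end{equation*}
where the first inequality uses $|m_{t,i}| > 1$ and the second is \Cref{assum:boundedaccumulator}. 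Summing the per-coordinate bound $\CPE{(h_{t,i})^2}{v_{t,i}} \leq \eta\kappa$ over $i \in \{1, \ldots, d\}$ and taking total expectation delivers $\PE[\Vert h_t \Vert^2] \leq \eta d \kappa$.

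The main obstacle is finding the right per-coordinate decomposition. A crude estimate $|h_{t,i}| \leq 2$ only gives $\PE[\Vert h_t \Vert^2] \leq 4d$, which is independent of $\eta\kappa$ and therefore useless as $\eta \to 0$. The extra factor of $\eta\kappa$ arises from coupling unbiasedness of $Q_0$ with the observation that the flipping threshold $|m_{t,i}| > 1$ forces $2 - |m_{t,i}| < 1$ inside the variance formula, pulling out one extra power of $|m_{t,i}|$ that is in turn controlled by \Cref{assum:boundedaccumulator}.
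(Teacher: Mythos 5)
Your proposal is correct and follows essentially the same route as the paper: a coordinate-wise bias--variance identity from the unbiasedness of $Q_0$ reducing the bound to $1-u_{t,i}^2$ with $u_t=\param_t-Q_1(m_t,\param_t)$, a case split on the flipping condition, and the observation that in the flip case $1-u_{t,i}^2=|m_{t,i}|(2-|m_{t,i}|)\leq |m_{t,i}|\leq \eta\kappa$. The paper writes this last step as $Q_1(m_t,\param_t)(2\param_t-Q_1(m_t,\param_t))\leq m_t(2\param_t-m_t)\leq |m_t|$, which is the same computation.
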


\begin{proof}
	Let consider a coordinate $i \in [d]$. $Q_0|_i$ as $-1$ or $+1$ for value with some probability $p_{i,t}$. For the ease of presentation, we will drop the subscript $i$. Denote $u_t := \param_t - Q_1(m_t, \param_t)$. Hence, $h_t$ can take value $(1-u_t)$ with some probability $p_t$ and $(-1-u_t)$ with probability $1-p_t$. Assumption \Cref{assum:stoflip} yields $2p_t-1=u_t$. Therefore, we can compute the variance of $h_t$ as follows:
	\begin{align*}
		\PE[\Vert h_t \Vert^2] & = \PE[\sum_i^d 1 + (\param_t - Q_1(m_t, \param_t))^2 -2Q_0(\param_t - Q_1(m_t, \param_t)(\param_t - Q_1(m_t, \param_t)] \\
		& = \sum_i^d ((1-u_t)^2p_t + (-1-u_t)^2(1-p_t)) \\
        & = \sum_i^d (1 + u_t^2 - 2u_t(2p_t-1)) \\
		& = \sum_i^d (1-u_t^2).
	\end{align*}
	The definition of $u_t$ leads to
	\begin{align*}
		1-u_t^2 & = 1-(1+Q_1(m_t, \param_t)^2-2\param_t Q_1(m_t, \param_t)) \\
		& =Q_1(m_t, \param_t)(2\param_t-Q_1(m_t, \param_t)).
	\end{align*}
	When $|m_t| \leq 1$ or $\sign(m_t) \ne \sign(\param_t)$, we directly have $Q_1(m_t, \param_t)(2\param_t-Q_1(m_t, \param_t)) = 0 \leq \eta \kappa$. When $|m_t| > 1$ and $\sign(m_t) = \sign(\param_t)$, we apply the definition of $Q_1$ to obtain:
	\begin{align*}
		Q_1(m_t, \param_t)(2\param_t-Q_1(m_t, \param_t)) & \leq m_t(2w_t - m_t) \\
		& \leq |m_t| \\
        & \leq \eta \kappa.
	\end{align*}
	Therefore, we can apply this result to every coordinate, and conclude that:
	\begin{equation*}
		\PE[\Vert h_t \Vert^2] \leq \eta d \kappa.
	\end{equation*}
\end{proof}

\subsection{Proof of \Cref{theorem:cvgquant}}
We now can proceed to the proof of \Cref{theorem:cvgquant}.
\begin{proof}
	Consider the virtual sequence $x_t=\param_t-e_t$. We have:
	\begin{align*}
		x_{t+1} & = Q_0(\param_t-\Delta_t) - (m_t-\Delta_t) \\
		& = (Q_0(\param_t-\Delta_t) + \Delta_t - e_t) - \eta \wtn f(\param_t).
	\end{align*}
	Considering the expectation with respect to the random variable $Q_0$ and the gradient noise, we have:
	\begin{align*}
		\CPE{x_{t+1}}{\param_t} = x_t - \eta \nabla f (\param_t).
	\end{align*}
	We consider $\PE_t[\cdot]$ the expectation with respect to every random process know up to time $t$. We apply the $L$-smoothness assumption \Cref{assum:smoothgradients}, and assumptions \Cref{assum:boundedvariance}, \Cref{assum:stoflip} to obtain:
	\begin{align*}
		\PE_t[f(x_{t+1})-f(x_t)] & \leq -\eta \langle \nabla f (x_t), \nabla f (\param_t) \rangle + \frac{L}{2} \PE_t[\Vert (Q_0(\param_t-\Delta_t) + \Delta_t) - \eta \wtn f(\param_t) - \param_t \Vert^2].
	\end{align*}
	We now reuse $h_t$ from \Cref{eq:ht} and simplify the above:
	\begin{align*}
		\PE_t[f(x_{t+1})-f(x_t)] & \leq -\eta \langle \nabla f (x_t), \nabla f (\param_t) \rangle + \frac{L}{2} \PE_t[\Vert h_t - \eta \wtn f(\param_t) \Vert^2] \\
		& \leq -\eta \langle \nabla f (x_t)-\nabla f (\param_t) + \nabla f (\param_t), \nabla f (\param_t) \rangle + \frac{L}{2} \PE_t[\Vert h_t - \eta \wtn f(\param_t) \Vert^2].
	\end{align*}
	Using Young’s inequality, we have that for any $\beta > 0$,
	\begin{align*}
		\PE_t[f(x_{t+1})-f(x_t)] \leq & -\eta \langle \nabla f (x_t)-\nabla f (\param_t) + \nabla f (\param_t), \nabla f (\param_t) \rangle \\
		& + \frac{L}{2}(1+\beta) \PE_t[\Vert h_t\Vert^2] + \frac{L}{2}\eta^2(1+\frac{1}{\beta})\sigma^2.
	\end{align*}
	Making use again of smoothness and Young's inequality we have:
	\begin{align*}
		\PE_t[f(x_{t+1})-f(x_t)] \leq & -\eta \Vert \nabla f (\param_t)\Vert^2 -\eta \langle \nabla f (x_t)-\nabla f (\param_t), \nabla f (\param_t) \rangle \\
		& + \frac{L}{2}(1+\beta) \PE_t[\Vert h_t\Vert^2] + \frac{L}{2}\eta^2(1+\frac{1}{\beta})\sigma^2 \\
		\leq & -\eta \Vert \nabla f (\param_t)\Vert^2 + \frac{\eta \rho}{2} \Vert \nabla f (\param_t) \Vert^2 + \frac{\eta}{2\rho} \Vert \nabla f(x_t) - \nabla f(\param_t) \Vert^2 \\
		& + \frac{L}{2}(1+\beta) \PE_t[\Vert h_t\Vert^2] + \frac{L}{2}\eta^2(1+\frac{1}{\beta})\sigma^2 \\
		\leq & -\eta \Vert \nabla f (\param_t)\Vert^2 + \frac{\eta \rho}{2} \Vert \nabla f (\param_t) \Vert^2 + \frac{\eta L^2}{2\rho} \underbrace{\Vert x_t - \param_t \Vert^2}_{\Vert e_t \Vert^2} \\
		& + \frac{L}{2}(1+\beta) \PE_t[\Vert h_t\Vert^2] + \frac{L}{2}\eta^2(1+\frac{1}{\beta})\sigma^2.
	\end{align*}
	Under the law of total expectation, we make use of \Cref{lem:boundederror} and \Cref{lem:boundedht} to obtain:
	\begin{align*}
		\PE[f(x_{t+1})] - \PE[f(x_t)] \leq & -\eta(1-\frac{\rho}{2}) \PE[\Vert \nabla f (\param_t)\Vert^2] + \frac{\eta L^2}{2\rho} \frac{2\delta(1+\delta)}{(1-\delta)^2}\eta^2\sigma^2 \\
		& + \frac{L}{2} (1+\beta) \eta d\kappa + \frac{L}{2}\eta^2(1+\frac{1}{\beta})\sigma^2.
	\end{align*}
	Rearranging the terms and averaging over $t$ gives for $\rho < 2$ (we can choose for instance $\rho = \beta = 1$):
	\begin{align*}
		\frac{1}{T+1} \sum_{t=0}^T \PE[\Vert \nabla f(\param_t) \Vert^2] \leq \frac{2(f(w_0)-f_*)}{\eta (T+1)} + 2L\sigma^2 \eta + 2L^2\sigma^2 \frac{\delta(1+\delta)}{(1-\delta)^2} \eta^2 + 2Ld\kappa.
	\end{align*}
\end{proof}

\section{Conclusion}
The bound in \Cref{theorem:cvgquant} contains 4 terms. The first term is standard for a general non-convex target and expresses how initialization affects convergence. The second and third terms depend on the fluctuation of the minibatch gradients. Another important aspect of the rate determined by \Cref{theorem:cvgquant} is its dependence on the quantization error. Note that there is an "error bound" of $2Ld\kappa$ that remains independent of the number of update iterations. The error bound is the cost of using discrete weights as part of the optimization algorithm. Previous work with quantized models also includes error bounds \citep{li2017training,li2019dimension}.

\section{Acknowledgments}
LL would like to thank Van Minh Nguyen for the useful discussions that lead to the idea of this project. We also thank Youssef Chaabouni for discussions, fixes and suggestions on manuscript writing.

\clearpage
\newpage
\bibliography{biblio}

\end{document}